\theoremstyle{plain}
\newtheorem{theorem}{Theorem}[section]
\theoremstyle{definition}
\theoremstyle{remark}
\icmltitlerunning{Submission and Formatting Instructions for ICML 2024}
\begin{document}
\twocolumn[
\icmltitle{\texttt{LEVIS:} Large Exact Verifiable Input Spaces for Neural Networks}




\icmlsetsymbol{equal}{*}

\begin{icmlauthorlist}
\icmlauthor{Mohamad Chehade}{equal,ut,lanl}
\icmlauthor{Wenting Li}{equal,lanl}
\icmlauthor{Brian W. Bell}{lanl}
\icmlauthor{Russell Bent}{lanl}
\icmlauthor{Saif R. Kazi}{lanl}
\icmlauthor{Hao Zhu}{ut}
\end{icmlauthorlist}

\icmlaffiliation{ut}{Chandra Department of Electrical and Computer Engineering, The University of Texas at Austin, Austin, TX, USA}
\icmlaffiliation{lanl}{Los Alamos National Laboratory, Los Alamos, NM, USA}

\icmlcorrespondingauthor{Mohamad Chehade}{chehade@utexas.edu}
\icmlcorrespondingauthor{Wenting Li}{wenting@lanl.gov}
\icmlcorrespondingauthor{Brian W. Bell}{bwbell@lanl.gov}
\icmlcorrespondingauthor{Russell Bent}{rbent@lanl.gov}
\icmlcorrespondingauthor{Saif R. Kazi}{skazi@lanl.gov}
\icmlcorrespondingauthor{Hao Zhu}{haozhu@utexas.edu}

\icmlkeywords{Reinforcement Learning, Transfer Learning, Distributional Risk, Machine Learning}

\vskip 0.3in
]

\printAffiliationsAndNotice{\icmlEqualContribution}

\begin{abstract}
The robustness of neural networks is crucial in safety-critical applications, where identifying a reliable input space is essential for effective model selection, robustness evaluation, and the development of reliable control strategies. Most existing robustness verification methods assess the worst-case output under the assumption that the input space is known. However, precisely identifying a verifiable input space \(\mathcal{C}\), where no adversarial examples exist, is challenging due to the possible high dimensionality, discontinuity, and non-convex nature of the input space.  To address this challenge, we propose a novel framework, \texttt{LEVIS}, consisting of \texttt{LEVIS-\(\alpha\)} and \texttt{LEVIS-\(\beta\)}. \texttt{LEVIS-\(\alpha\)} identifies a single, large verifiable ball that intersects at least two boundaries of a bounded region \(\mathcal{C}\). In contrast, \texttt{LEVIS-\(\beta\)} systematically captures the entirety of the verifiable space by integrating multiple verifiable balls. Our contributions are fourfold:  
(1) We introduce a verification framework, \texttt{LEVIS}, incorporating two optimization techniques for computing nearest and directional adversarial points based on mixed-integer programming (MIP).  
(2) To enhance scalability, we integrate complementarity-constrained (CC) optimization with a reduced MIP formulation, achieving up to a 6-fold reduction in runtime while approximating the verifiable region in a principled manner.  
(3) We provide a theoretical analysis characterizing the properties of the verifiable balls obtained through \texttt{LEVIS-\(\alpha\)}.  
(4) We validate our approach across diverse applications, including electrical power flow regression and image classification, demonstrating performance improvements and visualizing the geometric properties of the verifiable region.
\end{abstract}

\section{Introduction}
\label{sec:introduction}
Despite the remarkable growth and transformative impact of neural network models, their deployment in safety-critical domains remains limited. An example is neural network-based controllers for electric inverters that have significant potential to support the integration of renewable energy resources \cite{CJZ22}. These applications demand extremely high accuracy and reliability, as even minor errors can lead to millions of dollars in economic losses or large-scale power grid failures. However, neural networks are inherently vulnerable to data variations, including both natural perturbations and adversarial attacks. For instance, slight noise added to input data can lead to significant misclassification errors \cite{GSS14}. Even state-of-the-art foundation models can achieve less than 80\% accuracy when evaluated on perturbed inputs \cite{LLM}.

One promising approach to addressing these robustness challenges is \textit{neural network verification}. Most existing research focuses on verifying whether the \textit{outputs} of a neural network satisfy specific worst-case criteria within a predefined input domain \cite{MIP, IBP}. Techniques range from transforming nonlinear activations into integer constraints to compute worst-case outputs, to using linear \cite{Crown} or quadratic \cite{kuvshinov2022robustness} relaxations to estimate bounds. Advanced methods further integrate optimization techniques like bound tightening and leverage parallelism for speedup \cite{zhang2022general}. However, these approaches typically assume the input domain is predefined and do not explicitly evaluate the structure of the input space itself, potentially leading to under- or over-estimations of robust input regions.

\begin{figure}[t]
    \centering
    \includegraphics[width=0.5\textwidth]{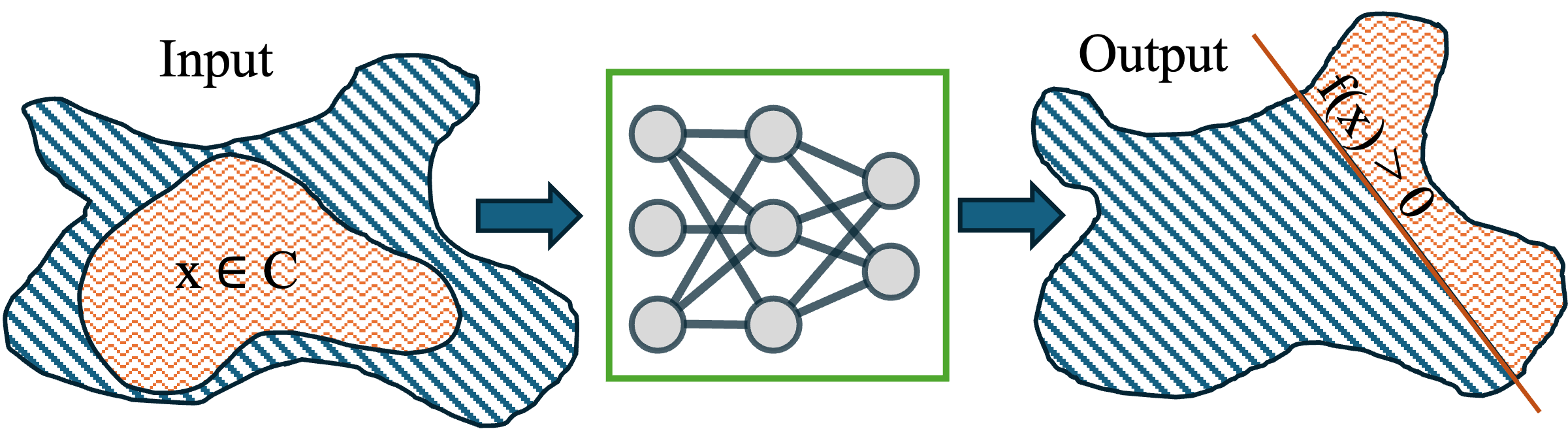}
    \caption{The verifiable input space (orange) is \(\mathcal{C}\), consisting of the inputs \(x \in \mathcal{C}\) that produce outputs satisfying the condition \(f(x) > 0\), determined by the red line. Note: the left orange space may only be a subset of the pre-image of the right orange space.}
    \label{fig:verify}
\end{figure}

As illustrated in Figure~\ref{fig:verify}, the verifiable input space \(\mathcal{C}\) contains all inputs \(x \in \mathcal{C}\) that produce outputs satisfying a given specification, such as \(f(x) > 0\) (shown by the red boundary). However, \(\mathcal{C}\) is often non-convex, discontinuous, and high-dimensional, making it difficult to characterize precisely. Early work explored the input space primarily to identify adversarial vulnerabilities \cite{peck2017lower}, while more recent research has aimed to define verifiable input regions meeting physical or accuracy-based criteria. These include abstraction-based approaches \cite{MF20}, over-approximation of verifiable regions \cite{kotha2024provably, sundar2023optimization}, and probabilistic verification techniques \cite{grunbacher2021verification, bell2023}. Nevertheless, the task of precisely identifying input regions that are entirely free of adversarial points remains largely unexplored.

To address this gap, we propose a novel framework, \texttt{LEVIS} (Large and Exact Verifiable Input Spaces), to identify verifiable input regions for neural networks. \texttt{LEVIS} consists of two algorithms: \texttt{LEVIS}-$\alpha$ and \texttt{LEVIS}-$\beta$. \texttt{LEVIS}-$\alpha$ identifies a single, large verifiable ball that intersects the true space \(\mathcal{C}\) at two boundaries, capturing a substantial region guaranteed to be verifiable. In contrast, \texttt{LEVIS}-$\beta$ extends this idea by constructing a collection of verifiable balls to cover the entire input space, which may be high-dimensional, non-convex, and discontinuous. Importantly, every point in the regions identified by \texttt{LEVIS} is guaranteed to satisfy the given verification criteria.

To this end, we develop a new optimization framework that computes the largest verifiable ball at a given center, locates adversarial points along specific directions, and scales effectively to large-scale problems. Our contributions are summarized as follows:

\begin{enumerate}
    \item We develop a novel optimization framework to accurately locate the largest verifiable ball centered at a known input \(x_0 \in \mathbb{R}^n\), using mixed-integer programming (MIP) with global optimality guarantees. We further incorporate directional constraints into the framework to locate adversarial points along specific directions.
    
    \item To enhance scalability, we propose a complementarity-constrained (CC) relaxation that formulates an approximate verifier. Unlike other approximate methods, our approach explicitly characterizes the conditions under which the solution remains globally optimal. This relaxation achieves up to a 6× speedup while maintaining a negligible error in global optimality (experimentally within 0.004). The core idea is to reformulate ReLU activations using CC constraints, reducing the number of integer variables in the MIP formulation.
    
    \item We propose two search strategies: \texttt{LEVIS}-$\alpha$, which seeks a single large verifiable ball centered in the interior of \(\mathcal{C}\), and \texttt{LEVIS}-$\beta$, which aggregates multiple verifiable balls to cover the full verifiable input space.
    
    \item We evaluate our methods on two neural network applications—electric power flow regression and image classification—demonstrating both performance gains and visual insights into the geometry of the verifiable input space.
\end{enumerate}

\section{Background and Related Work}
\label{sec:related}
\noindent \textbf{Preliminary Notation.}
We use $\mathcal{C}$ and $\mathcal{V}$ to denote the verifiable region and the union of verifiable balls, respectively. $\mathcal{B}(c)$ represents a ball centered at $c$. The vector $e_k$ is a standard basis vector where the $k$-th position is 1 and all others are 0. The $\| x \|_p$ norm is defined as $(\sum_{i=1}^d |x_i|^p)^{1/p}$ for $p = 1, 2$, and $\|x \|_{\infty} = \max_i |x_i|$. We define an $L$-layer rectified linear (ReLU) neural network, where layer $i$ (for $i = 1, \ldots, L$) uses weights $W^i \in \mathbb{R}^{d_i \times d_{i-1}}$ and biases $\beta^i \in \mathbb{R}^{d_i}$. The pre-activation and post-activation outputs for layer $i$ are given by $z^i = W^i \hat{z}^{i-1} + \beta^i$ and $\hat{z}^i = \sigma(z^i) = \max(z^i, 0)$, respectively. The input $\hat{z}^0$ equals the input data $x \in \mathbb{R}^{d_0}$, and the output of the network is $f(x) = z^L \in \mathbb{R}^{d_L}$. 

\noindent \textbf{Bounds on the Output Domain: Verification of Neural Networks.}
Bounding the outputs of neural networks (NNs) is an effective strategy for verifying robustness and enhancing model reliability. In standard verification, given an input region $\mathcal{S}$, the output must satisfy a predefined specification $\mathcal{P}$ to ensure safety or correctness. This is typically framed as an optimization problem: find $f^* = \min_{x \in \mathcal{S}} f(x)$, subject to $\hat{z}^0 = x$, $z^i = W^i \hat{z}^{i-1} + \beta^i$, and $\hat{z}^i = \max(z^i, 0)$ for $i = 1, \ldots, L$. The input domain is defined as $\mathcal{S} = \{x \mid \|x - x_0\|_{\infty} \leq \varepsilon\}$, and the network is considered verified if the condition $\mathcal{P} = \{f^* > 0\}$ holds.

Verification of neural networks is NP-complete, primarily due to nonlinear activation functions. Solution approaches can be categorized into three types: exact or complete verifiers~\cite{MIP}, approximate or incomplete verifiers~\cite{IBP,Crown}, and probabilistic verifiers~\cite{grunbacher2021verification, aaai24}. Unlike these methods, which focus on confirming output correctness for a known input domain, our work prioritizes input domain analysis to identify the largest possible region that yields verifiable outputs, ensuring all points within this region meet the satisfaction criteria.

\noindent \textbf{Bounds on the Input Domain: Robustness to Adversarial Perturbations.}
Inputs that, when perturbed, result in erroneous neural network outputs are termed \textit{adversarial examples}. The subset of inputs free from adversarial examples forms the \textit{verifiable input region}, which is crucial for robust model selection and training~\cite{kuvshinov2022robustness,LDWP23}. Early works primarily focused on measuring the maximum perturbation magnitudes that induce adversarial examples rather than characterizing the global structure of the verifiable input region~\cite{kuvshinov2022robustness}. A recent approach approximates the input space using a convex hull derived via bound propagation techniques~\cite{kotha2024provably}. However, this method results in an \textit{over-approximation} of the verifiable region, meaning that some points within the convex hull may not actually be verifiable.

\section{Problem Formulation}
\label{sec:problem_formulation}
A \textit{verifiable space} comprises input data points for neural networks that consistently yield verifiable outputs. Precisely identifying large verifiable spaces is crucial for model selection, robustness evaluation, and safe control operations~\cite{kotha2024provably}. The challenge lies in maximizing the verifiable input space $\mathcal{C}$, where all data points in $\mathcal{C}$ generate outputs that satisfy verification guarantees. This objective can be formalized as $\max_{\mathcal{C}} \min_{x \in \mathcal{C}} f(x) > 0$, which defines a fundamentally intractable min-max optimization problem, especially when $\mathcal{C}$ is non-convex.

To manage this complexity, we propose approximating the verifiable space using one or more verifiable balls. Each ball is defined by a center and radius such that all data points within it produce verifiable outputs when passed through the neural network. Crucial to our approach is the identification of centers and radii that ensure (1) the verifiability of all interior points under specified conditions, and (2) scalability to high-dimensional, non-convex input regions.


\section{Proposed Approach}
\label{sec:method} 
Our method begins by precisely identifying a maximal verifiable ball centered at a given point \(x_0\), denoted as the center \(c\), and then iteratively shifts this center to either identify a large boundary-touching verifiable ball (LEVIS-$\alpha$) or assemble a large union of such verifiable balls (LEVIS-$\beta$).

\subsection{Find a Maximum Verifiable Input Ball around $\mathbf{c}$}
We find the maximum verifiable ball around center \(c\) by locating the \textbf{nearest adversarial data point} \(x^*\). This point lies on the boundary of the verifiable input space and yields a violating output \(f(x^*) \leq 0\). The optimization problem for finding \(x^*\) is structured as follows:
\begin{subequations}\label{eq:original_problem}
\begin{align}
\min_{x} \quad & \|x - c \|_p \label{eq:original_objective} \\
\text{subject to} \quad & \hat{z}^{0} = x \label{eq:original_input} \\
& z^{i} = W^{i} \hat{z}^{(i-1)} + \beta^{i}, \quad i = 1, \ldots, L \label{eq:original_forward_propagation} \\
& \hat{z}^{(i)} = \sigma(z^{(i)}), \quad i = 1, \ldots, L-1 \label{eq:original_activation} \\
& f(x) = z^{L} \label{eq:original_property} \\
& f(x) \leq 0 \label{eq:original_verification}
\end{align}
\end{subequations}

Equation~\eqref{eq:original_verification} ensures the solution violates the specified output condition, i.e., \(x\) is adversarial. The objective in~\eqref{eq:original_objective} minimizes the \(l_p\)-norm distance from the center \(c\), seeking the nearest such adversarial point.

To solve this nonlinear program, we leverage a mixed-integer programming (MIP) formulation by explicitly encoding the ReLU activation functions. ReLU is a piecewise linear function: for \(\hat{z} = \text{ReLU}(Wz + b)\), we have \(\hat{z} = Wz + b\) if \(Wz + b \geq 0\), and \(\hat{z} = 0\) otherwise. This behavior can be encoded in a MIP by introducing a binary variable \(a\), where \(a = 1\) if \(Wz + b \geq 0\) and \(a = 0\) otherwise. Using the Big-M method, this conditional logic is translated into a set of linear constraints, thereby enabling the global optimization of ReLU networks using standard MILP solvers~\cite{MIP}. Similar formulations have been used for proving equivalence between networks~\cite{equivalence} and for finding the closest input that yields a target label~\cite{closest_classified}.

The verifiable ball, denoted as \(\mathcal{B}(c)\), is characterized by the center \(c\) and radius \(r = \|x^* - c\|_p\). All points inside \(\mathcal{B}(c)\) are guaranteed to produce verifiable outputs, with \(x^*\) representing the closest violation. To improve the scalability of the MIP solver, we propose a hybrid approach that integrates efficient nonlinear programming (NLP) with a reduced MIP formulation, achieving both computational efficiency and guaranteed optimality.
  
\subsection{Fast Solver with complementarity Constraints} \label{solver}
We represent the ReLU function using complementarity constraints (CC): \textcolor{blue}{$0 \leq \hat{z}^i \perp \hat{z}^i - z^i \geq 0$}, where the “$\perp$” operator enforces that at least one of the two inequalities holds as an equality; that is, $\hat{z}^i = 0$, $\hat{z}^i = z^i$, or both. This CC-based formulation replaces integer variables with continuous functions, enabling a more scalable neural network representation~\cite{YBL22, KJBSL23}. Although standard nonlinear programming (NLP) solvers can handle this formulation, they may converge to spurious stationary points with feasible descent directions~\cite{leyffer2007globally}. To mitigate this, we adopt the two-step hybrid strategy~\cite{kazi2024globally}, which integrates the NLP formulation with a reduced mixed-integer programming (MIP) approach. This strategy ensures global optimality under certain conditions while improving computational efficiency.

Specifically, let \( p^i \) and \( q^i \) be the complementarity variables for the \( i \)th neuron. The nonlinear optimization of \eqref{eq:original_problem} transforms into:
\begin{subequations}\label{NLP}
\begin{align} 
\min_{x, p^i, q^i} \quad & \text{\eqref{eq:original_objective}} \notag \\
\text{s.t.} \quad  
& z^i = p^i - q^i, \quad \hat{z}^i = p^i  \label{compli1} \\
& p^i q^i \leq 0, \quad p^i, q^i \geq 0 \label{compli2} \\
& \text{\eqref{eq:original_input}}, \text{\eqref{eq:original_forward_propagation}--\eqref{eq:original_verification}} 
\end{align}
\end{subequations}
where constraints in \eqref{compli2} represent a nonlinear equivalent of the ReLU function. To improve convergence behavior, we introduce a small regularization parameter $\varepsilon > 0$ (suggested to be $10^{-5}$), modifying the constraint to $p^i q^i \leq \varepsilon$~\cite{SS01}. This reduces the problem complexity from an NP-hard integer formulation to polynomial scaling in the number of neurons, \( N \).

However, NLP solvers may still converge to locally optimal solutions and spurious points where the bi-active set is non-empty, i.e., \( I_0 \equiv \{ j \mid p^i_j = q^i_j = 0 \} \). To address this, we classify neurons into three groups: if $p_j^i > 0$ and $q_j^i = 0$, then $j \in I^i_+$; if $p_j^i = 0$ and $q_j^i > 0$, then $j \in I^i_-$; and if $p_j^i = q_j^i = 0$, then $j \in I^i_0$. We then construct a reduced MIP that introduces integer variables only for neurons in the set \( I_0 \), balancing efficiency and optimality. This reduces the number of integer variables from \( N \) to \( |I_0| \), which is typically a small subset, as confirmed in Section~5.2. We assert that the solution is globally optimal and verifiable for all inputs within the ball, provided the neuron activation states—active (in \( I_+ \)) or inactive (in \( I_- \))—remain consistent with the sets \( I_+ \) and \( I_- \).

In terms of computational complexity, the NLP formulation derived from neural network verification exhibits substantial sparsity in its constraint Jacobian, owing to the localized connectivity and layered architecture of neural networks. As both the number of decision variables and constraints scale linearly with the total number of neurons, denoted by $N$, we characterize the overall complexity in terms of  $N$. When solved using a sparse interior-point method such as IPOPT~\cite{Ipopt}, the computational complexity is primarily governed by the cost of factorizing the Karush–Kuhn–Tucker (KKT) system at each iteration. This cost depends critically on the sparsity pattern and numerical properties of the constraint matrix. In favorable cases—such as when the matrix has a particular structure that permits efficient   factorization—the per-iteration cost can be reduced to  $O(NlogN)$ using solvers like MA57~\cite{duff2004ma57}. In contrast, for less structured or denser matrices, the cost may grow toward  $O(N^2)$.


\subsection{\texttt{LEVIS}: Large Exact Verifiable Input Spaces for Neural Networks}   
\texttt{LEVIS} employs two search strategies for identifying verifiable balls. The key innovation lies in dynamically updating the centers based on adversarial points along different directions, ensuring the verifiability of the newly selected centers. We first present the optimization formulation for computing directional adversarial points, followed by a detailed discussion on how these points guide the search trajectory across various geometric structures of the input space.

\subsubsection{Directional Adversarial Point}
Our \texttt{LEVIS} search algorithms are guided by adversarial points along specific directions, referred to as \textit{directional adversarial points}. To achieve this, we incorporate directional constraints into the original optimization problem in~\eqref{eq:original_problem}, which dictate the search direction. Given the center of the previous verifiable ball \( c \) and an adversarial point \( b \), the next adversarial point \( \hat{b} \in \mathbb{R}^n \) is determined along a direction that forms a specified angle \( \theta \) with the vector \( \overrightarrow{bc} \).

Precisely, we construct a general directional constraint and formulate the optimization for the \textit{directional adversarial point} \( \hat{b} \) in~\eqref{eq:ortho}, where the search direction forms a user-defined angle \( \theta \) with \( \overrightarrow{bc} \). To define this direction, we introduce two orthogonal vectors \( d, q \in \mathbb{R}^n \): the vector \( d \), aligned with \( \overrightarrow{bc} \), is given by \( d = \frac{c - b}{\|c - b\|_{\infty}} \). The vector \( q \) represents an orthogonal direction. It can be explicitly specified or generated by sampling a random vector \( \xi \sim \mathcal{N}(0,1) \) and removing its component along \( d \), yielding \( q = \xi - \frac{\xi^T d}{d^T d} d \), where the second term represents the projection of \( \xi \) onto \( d \).

Using these definitions, any direction can be expressed as \( \phi(\theta) = d \cos(\theta) + q \sin(\theta) \), and the directional constraint for \( \hat{b} \) is given by \( \hat{b} = c + k \phi \). Special cases of this formulation include: when \( \theta = 0 \), the constraint reduces to the collinear direction; when \( \theta = 90^\circ \), it corresponds to the orthogonal constraint, i.e., \( (\hat{b} - c) \cdot (b - c) = 0 \). This formulation enables flexible directional adversarial search while maintaining mathematical rigor and generalizability.

\begin{subequations}\label{eq:ortho}
\begin{align}
\min_{\hat{b}, k > 0 } \quad & \|\hat{b}- c \|_p \\
\text{subject to} \quad &  \eqref{eq:original_input}-\eqref{eq:original_verification}, \quad x = \hat{b} \\
&  \hat{b} = c + k \phi(\theta) \label{ortho} \\
&  \phi(\theta) = d \cos(\theta) + q \sin(\theta) \label{eq:direction} \\
&  d = \frac{c - b}{\|c - b\|_{\infty}}, \quad q = \xi - \frac{\xi^T d}{d^T d} d
\end{align}
\end{subequations}

\begin{figure}[!ht]
    \centering
    \includegraphics[width=0.4\textwidth]{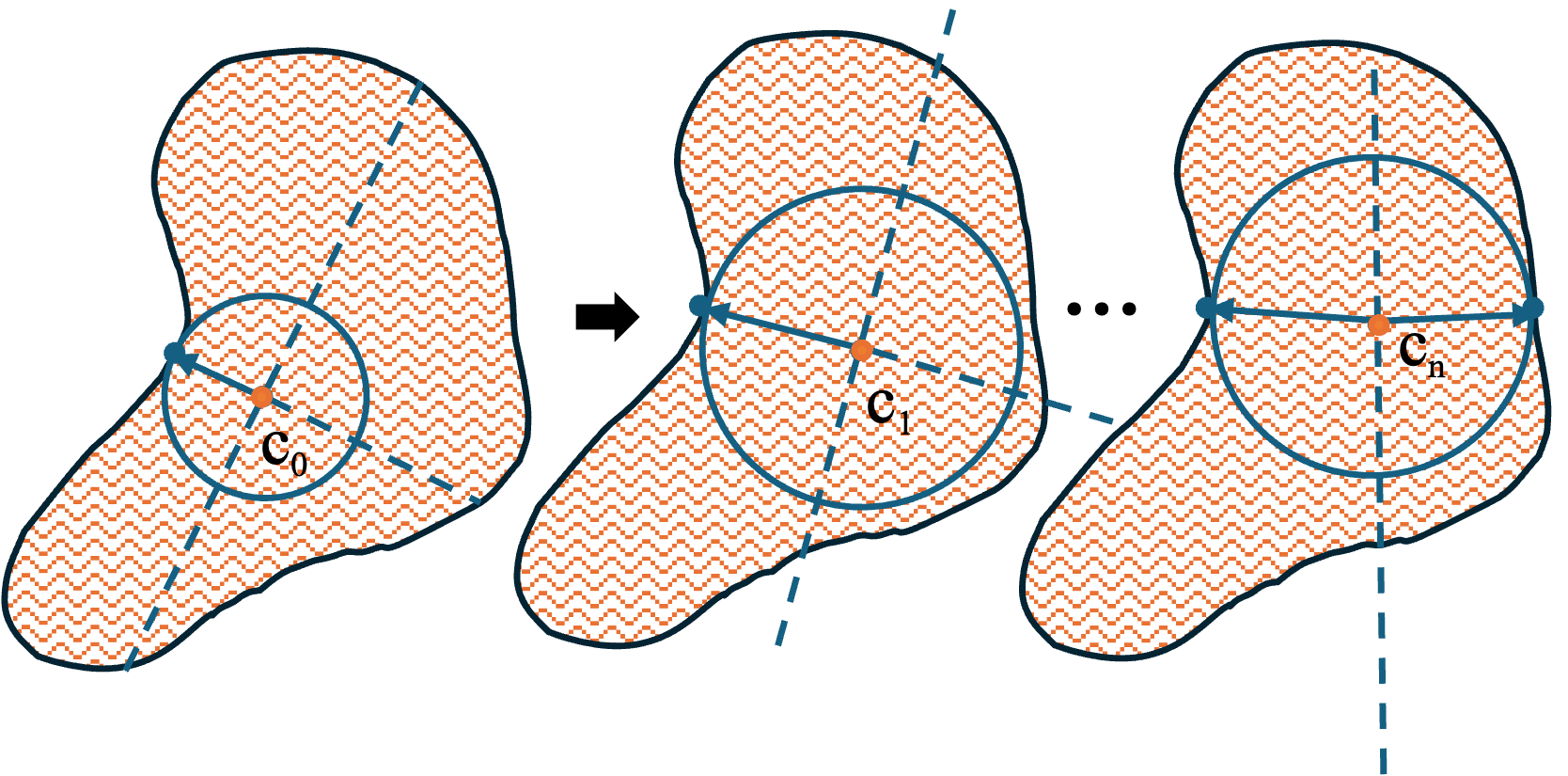}
    \caption{Illustration of the sequence of balls centered at the red points \( c_0, \cdots, c_n \) obtained by LEVIS$-\alpha$, where the arrows point to the nearest adversarial points that define the radius of those balls. The search converges at the final ball \( \mathcal{B}(c_n) \), which touches at least two boundaries of the bounded verifiable input space.}
    \label{fig:alpha}
\end{figure}

\subsubsection{(1) \texttt{LEVIS-$\alpha$}: Search for the Large Boundary-Touching Verifiable Ball} \label{sec:algorithm1}
The core technique of this search strategy is to average \( d \) pairs of boundary adversarial points \( b_{2j+1} \) and \( b_{2j+2} \), \( j = 0, \ldots, d-1 \), obtained by solving the minimization in \eqref{eq:original_problem} with directional constraints. The detailed algorithm, termed \texttt{LEVIS}-$\alpha$, is described in Algorithm~\ref{alg1} and illustrated in Figure~\ref{fig:alpha}.

\begin{algorithm}[!ht]
\caption{\texttt{LEVIS-$\alpha$}: Iterative Refinement for Center Estimation}
\label{alg1}
\begin{algorithmic}[1]
\STATE \textbf{Initialize:} Given the original data $x_0$ as the ball center \( c = x_0 \), set \( r = \infty, r_\text{old} = 0 \), tolerance \( \epsilon > 0 \), and neural network parameters \( \Theta = \{W^i, \beta^i, i = 1, \ldots, L\} \).
\WHILE{ \( \| r - r_\text{old} \| \geq \epsilon \)}
    \STATE \( b^1 \leftarrow \text{Solve \eqref{eq:original_problem}} \), set \( r = \|b^1 - c\|_p \)
    \STATE \( b^2 \leftarrow \text{Solve \eqref{eq:ortho} given } b^1 \) and \( \theta = 90^\circ \)
    \FOR{\( j = 1 \) to \( d - 1 \)}
        \STATE \( b_{2j+1} \leftarrow \text{Solve \eqref{eq:ortho} given } b_{2j-1} \), \( \theta = 90^\circ \)
        \STATE \( b_{2j+2} \leftarrow \text{Solve \eqref{eq:ortho} given } b_{2j} \), \( \theta = 0^\circ \)
    \ENDFOR
    \STATE \( c \leftarrow \frac{1}{2d} \sum_{l=1}^{2d} b_l \), \( r_\text{old} \leftarrow r \)
\ENDWHILE
\STATE \textbf{Return} \( c, r \)
\end{algorithmic}
\end{algorithm}

Algorithm~\ref{alg1} updates the center of the verifiable ball in each iteration and eventually converges to a large exact verifiable ball in the central region of the verifiable space, touching at least two adversarial points, as illustrated in Figure~\ref{fig:alpha}. In each iteration, the algorithm searches for \( d \) pairs of adversarial points along collinear and orthogonal directions. The final region obtained from \texttt{LEVIS}-$\alpha$ is rigorously described below. 

\begin{theorem}
For any bounded verifiable region \( \mathcal{C} \subset \mathbb{R}^d \), the sequence of balls \( \{\mathcal{B}(c_n)\} \) generated by \texttt{LEVIS}-$\alpha$ converges to a ball \( \mathcal{B}(c) \) that intersects the boundary of \( \mathcal{C} \) at least at two points. Specifically, for any \( \epsilon > 0 \), there exists \( N \) such that for all \( n > N \), \( \mathcal{B}(c) \) contains two points almost symmetric about the center \( c \), with less than \( \epsilon \) mismatch, and these points lie on the boundary of \( \mathcal{C} \).
\end{theorem}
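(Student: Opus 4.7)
My plan is to combine a compactness argument for existence of the limit with a geometric critical-point characterization that produces the almost-symmetric pair on $\partial\mathcal{C}$. Every iterate $c_n$ (for $n\ge 1$) is a convex combination of $2d$ adversarial points lying on $\partial\mathcal{C}$, so $\{c_n\}\subset \operatorname{conv}(\partial\mathcal{C})$, a compact set because $\mathcal{C}$ is bounded; likewise $r_n \le \operatorname{diam}(\mathcal{C})$. Bolzano--Weierstrass then yields a convergent subsequence $(c_{n_k}, r_{n_k}) \to (c^\ast, r^\ast)$. Since $b_1$ is chosen as the global nearest adversarial to $c_n$, every $b_l^{(n)}$ satisfies $\|b_l^{(n)}-c_n\|_p \ge r_n$, so each new center sits amid boundary points at distance at least $r_n$; combined with the stopping criterion $|r_{n+1}-r_n|<\epsilon$, the iteration terminates in finite time at a limit ball $\mathcal{B}(c^\ast)$ determined up to $\epsilon$-tolerance.

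Passing to the limit in the averaging step gives the fixed-point identity $c^\ast = \tfrac{1}{2d}\sum_l b_l^{(\infty)}$, which identifies $(c^\ast, r^\ast)$ as a critical point of the inradius functional $R(c) := \operatorname{dist}_p(c, \partial\mathcal{C})$. The symmetric pair then comes directly from the collinear step of Algorithm~1: with $\theta=0$ the constraint reads $\hat b = c + k(c-b_{2j})/\|c-b_{2j}\|_\infty$, $k>0$, so $b_2^{(\infty)}$ and $b_4^{(\infty)}$ lie on a common line through $c^\ast$ on opposite sides. The minimality of $b_1^{(\infty)}$ then forces the two collinear distances to coincide: if $\|b_2^{(\infty)}-c^\ast\|_p \neq \|b_4^{(\infty)}-c^\ast\|_p$, a small shift of $c^\ast$ along that axis would move one of them strictly inside radius $r^\ast$, contradicting $r^\ast = \min_{x\in\partial\mathcal{C}}\|x-c^\ast\|_p$. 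Hence $b_2^{(\infty)}$ and $b_4^{(\infty)}$ are antipodal and equidistant from $c^\ast$, i.e.\ symmetric about $c^\ast$, and both lie on $\partial\mathcal{C}$ by construction. For finite $n>N$ the mismatch is $O(\epsilon)$ by the halting tolerance, giving the quantitative claim.

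The principal technical difficulty is the lack of continuity of the MIP argmin map $c\mapsto b^1(c)$: because ReLU activation patterns switch across the input, this operator is only upper hemicontinuous, so the update $T(c) = \tfrac{1}{2d}\sum_l b_l(c)$ can jump as $c$ crosses an activation-switching hypersurface. A rigorous proof therefore needs either a closed-graph / Kakutani-style fixed-point argument for the selection correspondence, or the standard device of restricting to a neighborhood on which the activation pattern has stabilized so that $T$ is locally piecewise affine; only then can subsequential convergence be upgraded to convergence of the full sequence. A secondary point is feasibility of the directional MIP~(6), for which boundedness of $\mathcal{C}$ guarantees that an adversarial point exists along every prescribed ray.
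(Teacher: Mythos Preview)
Your compactness-plus-fixed-point strategy is genuinely different from the paper's, which in the appendix quietly \emph{adds a convexity hypothesis} on $\mathcal{C}$ and then argues that an asymmetry measure $\Delta_n$ (the maximum difference of opposite ray-distances from $c_n$ to $\partial\mathcal{C}$) contracts at each step, at a rate governed by a geometric constant $\lambda$ defined as the maximal rate of change of the chord-width of $\mathcal{C}$. Convexity is doing the real work there: the authors explicitly remark that without it the iteration can oscillate between sub-regions separated by a bottleneck and fail to converge. Since you make no convexity assumption, you are in effect attempting the stronger (and, per the paper's own note, false in general) version of the statement.

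Beyond that mismatch, there are concrete gaps in your argument as written. First, subsequential compactness of $(c_n,r_n)$ does not give termination: the halting test $|r_{n+1}-r_n|<\epsilon$ is not implied by boundedness alone, and $r_n$ may oscillate forever with increments $\ge\epsilon$ (exactly the scenario the paper warns about). Second, the identity $c^\ast=\tfrac{1}{2d}\sum_l b_l^{(\infty)}$ does \emph{not} make $c^\ast$ a critical point of $R(c)=\operatorname{dist}_p(c,\partial\mathcal{C})$: the $b_l$ are directional minimizers along prescribed rays, not subgradients of $R$, so no first-order stationarity follows. Third, and most importantly, your symmetry step does not close: even at a fixed point nothing forces $\|b_2^{(\infty)}-c^\ast\|_p=\|b_4^{(\infty)}-c^\ast\|_p$, since both are only known to be $\ge r^\ast$, and your proposed shift of $c^\ast$ along that axis changes $r^\ast$ itself rather than contradicting minimality at the \emph{unshifted} center; moreover any imbalance in this pair can be absorbed in the average by the remaining $2d-2$ points. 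To repair the argument you would need either the paper's convexity-plus-$\lambda$ contraction to drive $\Delta_n\to 0$ pairwise, or a genuine variational characterization showing $c^\ast$ locally maximizes the inradius so that the nearest boundary point necessarily has an antipodal counterpart at the same distance.
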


The rationale is that \texttt{LEVIS}-$\alpha$ progressively averages the collinear adversarial pairs \( (b_{2j+1}, b_{2j+2}) \) using \( c = \frac{1}{2d} \sum_{l=1}^{2d} b_l \), minimizing asymmetry relative to the center in each iteration. Given that \( \mathcal{C} \) is bounded, the search halts when at least one pair \( (b'_1, b'_2) \) touches the surface of the ball, while other pairs become symmetric with less than \( \epsilon \) mismatch. Hence, the final ball intersects the boundary of \( \mathcal{C} \) at at least two distinct points. The complete proof is provided in Appendix~A.

\texttt{LEVIS}-$\alpha$ is suitable for exploring small, bounded verifiable input spaces. In contrast, we introduce \texttt{LEVIS}-$\beta$, a more comprehensive and parallel-friendly strategy that systematically identifies verifiable regions even when they are high-dimensional, non-convex, or discontinuous.
   
\label{sec:algorithm2}
\subsubsection{(2) \texttt{LEVIS-$\beta$}: 
Collect the Large Union of Verifiable Balls}
\begin{wrapfigure}{r}{0.3\textwidth}
    \centering
    \includegraphics[width=0.15\textwidth]{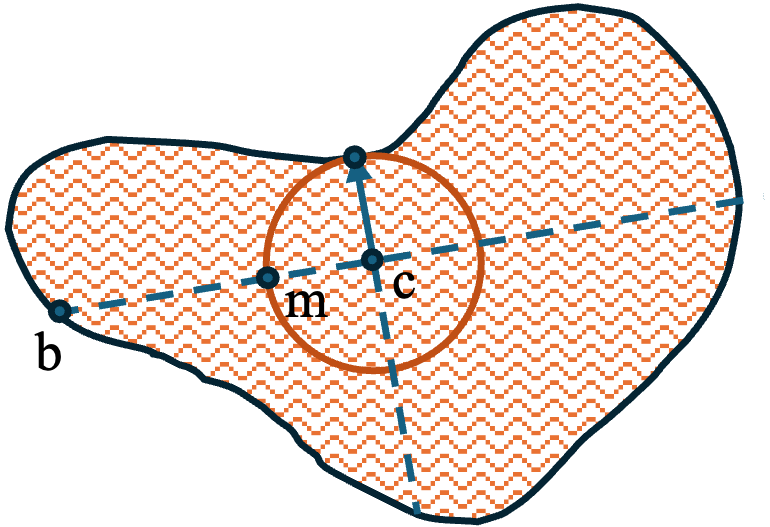}
    \caption{Illustration of the geometric position of the surface point.}
    \label{fig:mid}
\end{wrapfigure}

The innovative idea of this search strategy is to look for a verifiable new center outside the known verifiable union by extensively moving in all directions. Specifically, by solving \eqref{eq:original_problem} and \eqref{eq:ortho}, we obtain a verifiable ball $\mathcal{B}(c)$ centered at $c$ and its nearest directional adversarial point $b$. Once a new ball $\mathcal{B}(c)$ is found, we define the next center to be the point that lies on the same line as $c$ and $b$ but on the internal surface of $\mathcal{B}(c)$, i.e.,
\[
m = c - \gamma \frac{c - b}{\|c - b\|_p} \cdot r, \quad \gamma < 1,
\]
as illustrated in Figure~\ref{fig:mid}.

\begin{figure}[t]
    \centering
    \includegraphics[width=0.41\textwidth]{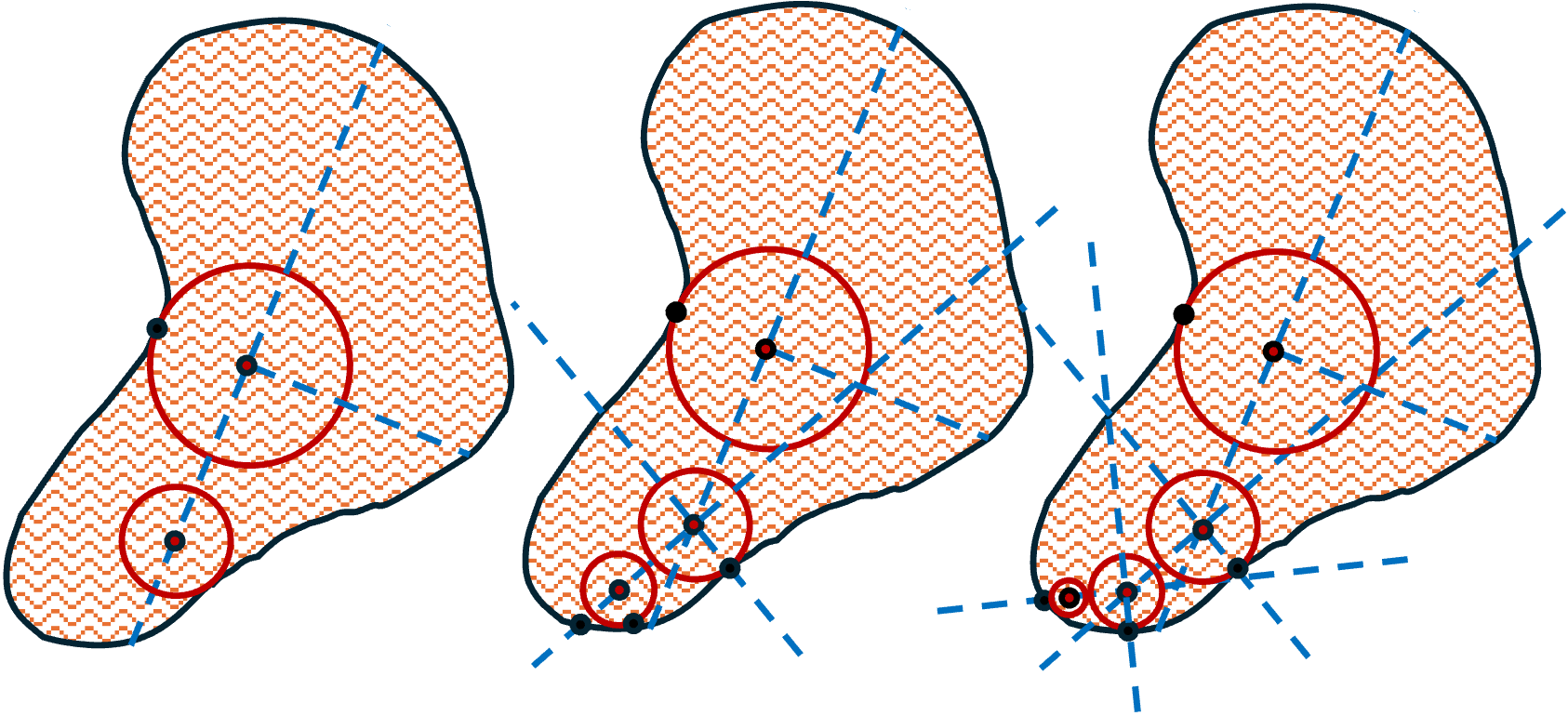}
    \caption{\texttt{LEVIS}-$\beta$ searches for the union of verifiable balls guided by directional adversarial points.}
    \label{fig:verify_balls}
\end{figure}

\begin{algorithm}[ht]
\caption{\texttt{LEVIS-$\beta$}}
\label{alg:beta}
\begin{algorithmic}[1]
\STATE \textbf{Initialize:} Given one data point $x_0$ and label $y_0$, the input bounds $[l, u]$, tolerance $\epsilon > 0$, shift factor $\gamma = 0.99$, trained neural network parameters $\Theta = \{W^i, \beta^i, i=1, \dots, L\}$, step size $\Delta$, angle $\theta$, and a random seed.
\STATE Initialize the set of verifiable balls $\mathcal{V} = \emptyset$, let $c = x_0 + \Delta$, and initialize the queue $Q = \{c\}$
\WHILE{$Q$ is not empty}
    \STATE Pop the first element of $Q$ as $c$; solve \eqref{eq:original_problem} to obtain the radius $r$ centered at $c$ with a corresponding adversarial point $b$.
    \IF{$r < \epsilon$}
        \STATE Sample a point from either the range $[l, c]$ or $[c, u]$ and iteratively resample until it is verifiable.
        \STATE Add this point to $Q$.
    \ELSE
        \STATE Update the set $\mathcal{V} = \mathcal{V} \cup \{(c, r)\}$.
        \STATE Solve \eqref{eq:ortho} along the direction $\phi(\theta)$ to find the directional adversarial point $\hat{b}$.
        \STATE Compute the surface point: \( m = c - \gamma \frac{c - b}{\|c - b\|_p} \cdot r \).
        \STATE Terminate the search if $m \notin [l, u]$, otherwise add $m$ to $Q$.
        \WHILE{$m \in \mathcal{V}$ and $\|m - \hat{b}\|_p > \epsilon$}
            \STATE $m = (m + \hat{b}) / 2$
        \ENDWHILE
    \ENDIF
\ENDWHILE
\STATE \textbf{Return} $\mathcal{V}$
\end{algorithmic}
\end{algorithm}

Our new Algorithm~\ref{alg:beta}, termed \texttt{LEVIS}-$\beta$, preserves the verifiability of the center while exploring multiple directions to identify external balls beyond the known union. Step 6 prevents the search from getting stuck at discontinuous or local boundaries before reaching the full extent of the input space, effectively addressing the discontinuity challenge in $\mathcal{C}$. Steps 10–12 update the center using the surface point in the $\phi(\theta)$ direction, ensuring both feasibility (i.e., $m \in [l,u]$) and that $m$ lies within a verifiable ball. Steps 13–15 refine $m$ by nudging it toward $\hat{b}$ if it overlaps with the existing union.

Figure~\ref{fig:verify_balls} illustrates the expansion of the union over iterations, where blue lines indicate search directions and red circles represent verifiable balls.

\noindent \textbf{Remark:} We assert that the new center \( c \) remains a verifiable point throughout the execution of Algorithm~\ref{alg:beta}. Since \( \hat{b} \) is the nearest adversarial point along the direction $\phi(\theta)$, any point closer to the original center \( c \) along this direction remains verifiable. The surface point computed in Steps 10–12 satisfies this condition. Moreover, because each new center \( c \) is located outside the existing union of verifiable balls, the union’s total volume expands progressively during the search. Notably, \texttt{LEVIS}-$\beta$ can be executed in parallel across multiple directions with varying angles \( \theta \), shift parameters \( \Delta \), and random seeds to efficiently discover the full verifiable input space, as discussed in Section~\ref{parallel}.


\section{Experiments}
\label{sec:experiments}

We implement our algorithms on two benchmark systems designed to meet specific requirements for physical constraints and robust accuracy. We employ three-layer dense neural networks to model solutions for both optimal power flow (regression) and image classification tasks.

For the optimal power flow (OPF) problem, we generate a direct-current (DC) power flow dataset, denoted as \( \mathcal{D}_1 = \{(x_k, y_k) \mid x_k \in \mathbb{R}^3, y_k \in \mathbb{R}^3 \}_{k=1}^{1000} \), using the IEEE 9-bus power grid benchmark—a widely recognized model for simulating energy flow in power systems. The specification for DC-OPF requires the network output to remain within a predefined limit \( y_{\text{max}} \), i.e., \( f(x_k) \leq y_k^{\text{max}} \).

For image classification, we utilize the MNIST~\cite{mnist} and CIFAR-10~\cite{cifar10} datasets. MNIST consists of 60{,}000 training and 10{,}000 testing samples with an input dimension of 784, while CIFAR-10 includes 50{,}000 training and 10{,}000 testing samples with an input dimension of 3{,}072. The robustness requirement mandates that the predicted probability of the ground-truth class \( j \) must be higher than that of an adversarial class \( l \), formulated as: \( f(x_k)_j - f(x_k)_l > 0 \). Further details on the datasets are provided in Appendix~B.

To solve the mixed-integer programming (MIP) problems in \eqref{eq:original_problem} and \eqref{eq:ortho}, we use Gurobi with OMLT~\cite{ceccon2022omlt} for small neural networks (fewer than 100 neurons) trained on DC-OPF datasets. For large-scale MNIST and CIFAR-10 networks, we adopt the Complementary Constraints (CC) approach (Section~\ref{solver}), employing Ipopt~\cite{Ipopt} for the NLP formulation and Gurobi for the reduced MIP. All optimization models are implemented using Pyomo~\cite{PYOMO}.


\subsection{Visualization of the Exact Verifiable Ball} 
\begin{figure}[t]
    \centering
    \includegraphics[width=0.45\textwidth]{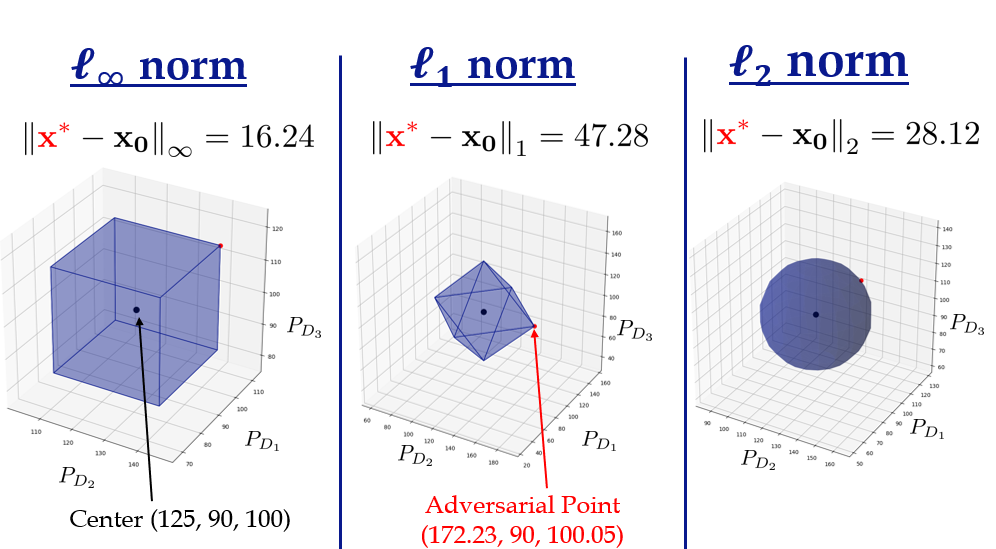}
    \caption{Single maximum verifiable balls obtained by solving \eqref{eq:original_problem} with different $\ell_p$ norms using the DC-OPF dataset. The center is \( x_0 = [125,90,100] \), with the nearest adversarial points \( x^* \) marked as red nodes. The radius, given by \( \| x^* - x_0 \|_p \), represents the smallest distance between \( x_0 \) and \( x^* \). In the 3D input space, the verifiable regions for $\ell_\infty$, $\ell_1$, and $\ell_2$ norms (left to right) correspond to a cube, an octahedron, and a sphere, respectively.}
    \label{fig:dc_opf_regions}
\end{figure}  

Figure~\ref{fig:dc_opf_regions} illustrates the verifiable balls for different $\ell_p$ norms ($p = 1, 2, \infty$) using the DC-OPF dataset in 3D. The analysis provides key insights into selecting \( p \) and refining the search strategy. Notably, the volumes of these balls vary significantly, with the $\ell_{\infty}$ norm producing the largest. We choose \( p = \infty \) because its cube-shaped geometry minimizes gaps when adjacent cubes intersect, enhancing coverage. Although our solution to \eqref{eq:original_problem} guarantees verifiability within the ball, points equidistant from \( x_0 \) to \( x^* \) may still be adversarial, potentially appearing anywhere on the surface. This highlights the challenge of expanding verifiable regions via vertex enumeration. Instead, our method explores optimal directions to identify new verifiable balls, improving both coverage and efficiency.





\subsection{Efficiency Comparison} 
We evaluate the runtime of solving the optimization problem in \eqref{eq:original_problem} and compare our method against a strengthened MIP baseline that uses bounds from the CROWN method. Experiments are conducted on image recognition datasets (MNIST and CIFAR-10) using a three-layer ReLU network with 894 neurons. 

\begin{table}[!ht]
\centering
\caption{Runtime (seconds) and relative speedup (in parentheses) for each method. Speedup is relative to the MIP$_\text{CROWN}$ baseline.}
\label{tab:time}
\setlength{\tabcolsep}{1.5pt} 
\renewcommand{\arraystretch}{1.2}
\begin{tabular}{lccc}
\toprule
\textbf{Dataset} & \textbf{MIP$_\text{CROWN}$ (s)} & \textbf{Proposed (s)} & \textbf{Optimality Gap} \\
\midrule
MNIST & 10.48 & \textbf{1.34} (7.82$\times$) & 0.004 \\
CIFAR-10 & 17.47 & \textbf{5.96} (2.93$\times$) & 0.0009 \\
\bottomrule
\end{tabular}
\end{table}

As shown in Table~\ref{tab:time}, the proposed method significantly improves efficiency compared to the MIP$_\text{CROWN}$ baseline, achieving a $7.82\times$ speedup on MNIST and a $2.93\times$ speedup on CIFAR-10. Our solver (Section~\ref{solver}) solves each instance in 1.34 seconds on MNIST and 5.96 seconds on CIFAR-10, while maintaining a small optimality gap. The total runtime includes both a nonlinear programming (NLP) phase and a reduced MIP phase.




\subsection{Comparison of Radius Size with the Baseline}
\label{sec:baseline}

We compare the size of the verifiable ball obtained by our method with a baseline method based on Lipschitz constants~\cite{fazlyab2021introduction}. As neural networks are Lipschitz continuous with constant \( L = \prod_{i=1}^L \|W^i\|_p \), we can estimate a lower bound on the adversarial radius as
\[
\| x^* - c \|_p \geq \frac{\delta}{L}, \quad \text{where} \quad \delta = \min_{i \neq k} \frac{1}{\sqrt{2}} |(e_k - e_i)^T f(x^*)|.
\]

We train neural networks to regress optimal power flow solutions and compare the verifiable radius from our method (solving the NLP in \eqref{eq:original_problem}--\eqref{eq:original_verification}) against the Lipschitz-based bound.

Due to the inherent randomness in training neural networks, we repeat the experiment across five independently trained models and report the resulting range of radius values. This repetition ensures robustness of the comparison across different training instances.

Table~\ref{tab:baseline} shows that our method consistently finds verifiable balls with radii up to 10x larger than those estimated by the baseline, underscoring the benefit of solving the exact optimization problem directly.

\begin{table}[!ht]
\centering
\caption{Comparison of radii from the exact solver vs. Lipschitz-based lower bound for the DC-OPF network.}
\label{tab:baseline}
\setlength{\tabcolsep}{6pt}
\renewcommand{\arraystretch}{1.2}
\begin{tabular}{lccc}
\toprule
\textbf{Method} & $p = \infty$ & $p = 1$ & $p = 2$ \\
\midrule
Baseline $r$ & 4.52 & 4.80 & 12.88 \\
\midrule
Our $r$ & \textbf{16.24} & \textbf{47.28} & \textbf{28.12} \\
\bottomrule
\end{tabular}
\end{table}


\subsection{Performance of LEVIS$-\alpha$ for DC-OPF}  
We implement \texttt{LEVIS-$\alpha$} for the DC-OPF task and track the variations in the radius of verifiable balls, as shown in Figure~\ref{fig:evolution}. Each solid blue line denotes the mean radius across five random implementations of \texttt{LEVIS-$\alpha$}, and the shaded regions represent standard deviations.

\begin{wrapfigure}{r}{0.2\textwidth}
    \centering
    \includegraphics[width=0.2\textwidth]{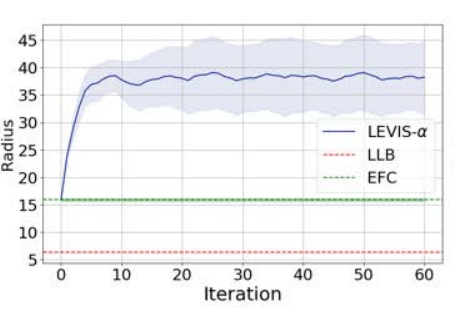}
    \caption{Radius over iterations for $\ell_\infty$}
    \label{fig:evolution}
\end{wrapfigure}


Similar behavior is observed for other norms ($p = 1, 2$). To demonstrate the substantial size of the final verifiable ball obtained by \texttt{LEVIS-$\alpha$}, we compare it against two alternative methods: the Exact Fixed Center (EFC) method from \eqref{eq:original_problem}, and a version of the Lipschitz-based Lower Bound (LLB) method~\cite{fazlyab2021introduction}. Two key observations emerge: (1) \texttt{LEVIS-$\alpha$} yields radii 3–9× larger than the baselines, justifying its iterative center refinement in Algorithm~\ref{alg1}; and (2) the radius converges, suggesting that the final ball touches two distinct edges of the verifiable region.

\subsection{Performance of \texttt{LEVIS-$\beta$} for DC-OPF}
\begin{figure}[!ht]
    \centering
    \includegraphics[width=\linewidth]{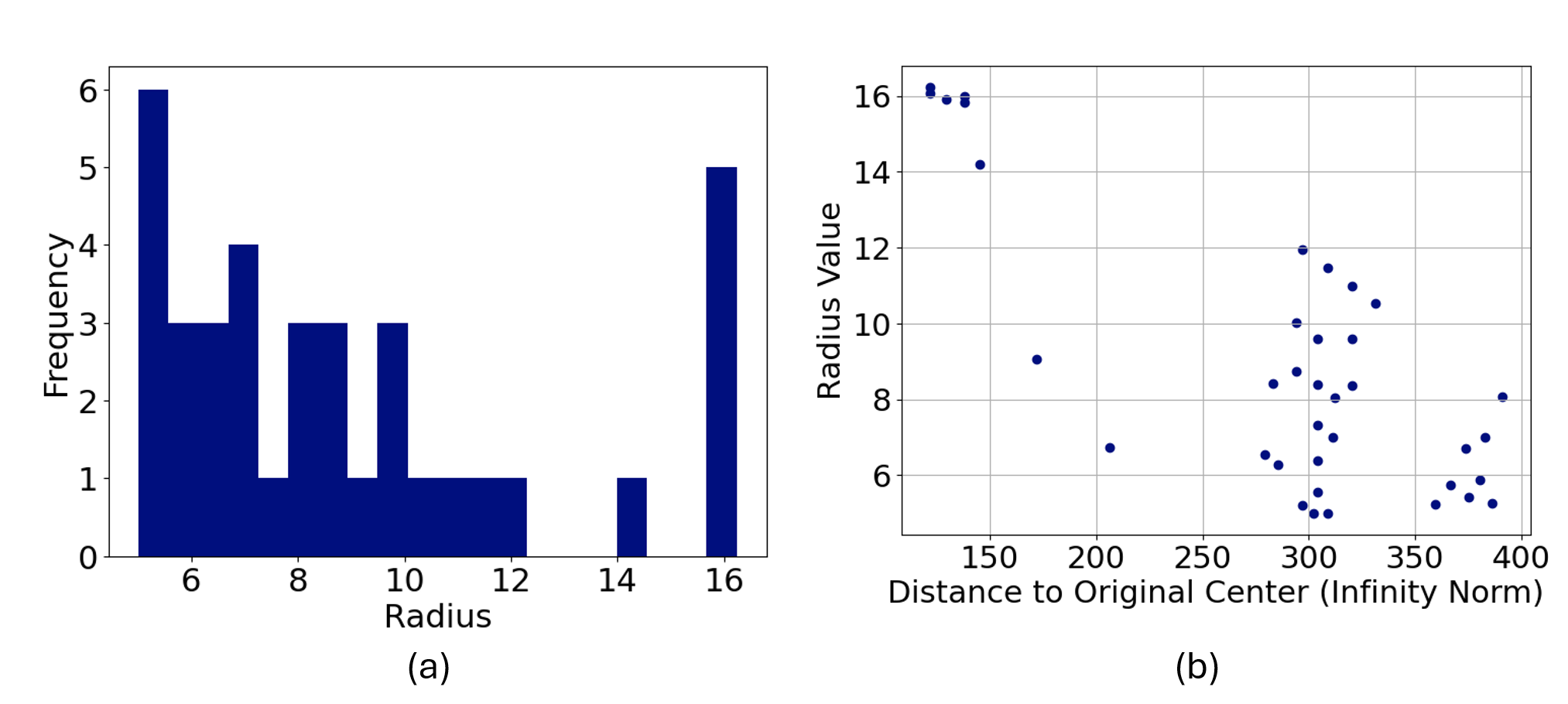}
    \caption{Radii distribution for \texttt{LEVIS}-$\beta$: (a) histogram of radii; (b) radii vs. distance to initial center. Smaller-radius balls tend to lie near the boundary of the verifiable region.}
    \label{fig:beta_distributions_DC-OPF}
\end{figure}

We evaluate the statistical behavior of \texttt{LEVIS-$\beta$} on the DC-OPF task. Across iterations, \texttt{LEVIS-$\beta$} discovers verifiable balls of varying radii, with smaller balls appearing farther from the original center and closer to the boundary of the verifiable region. Figure~\ref{fig:beta_distributions_DC-OPF}(a) shows a histogram of radii, while (b) plots each radius against its distance to the original center \( x_0 \). Together, these visualizations indicate that smaller-radius balls dominate the union and are concentrated near the boundaries—highlighting the natural shrinkage of verifiable regions in such areas.

\subsection{Distribution of Radii Discovered by \texttt{LEVIS-$\beta$} in High-Dimensional Space}

We analyze the distribution of verifiable ball radii discovered by \texttt{LEVIS-$\beta$} on the MNIST dataset. Compared to DC-OPF, the radii are generally smaller, which we attribute to the higher input dimensionality and network complexity. We visualize the relationship between radii and distance to the initial center under varying search angles $\theta$, initialization shifts $\Delta$, and random seeds.

\begin{figure}[t]
    \centering
    \includegraphics[width=0.48\linewidth]{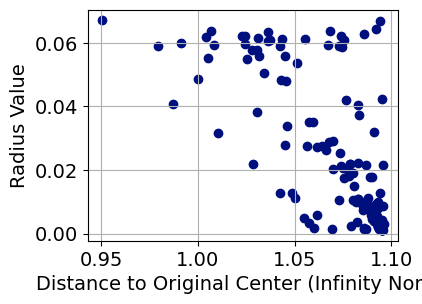}
    \includegraphics[width=0.48\linewidth]{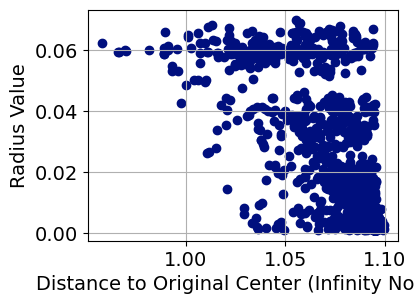}
    \caption{Radii distribution from \texttt{LEVIS}-$\beta$ for different directions $\theta \in \{0, 90, 180, 270, 360^\circ\}$ (left) and initialization values $\Delta \in \{0.02, 0.04, 0.06, 0.08\}$ with 10 random seeds per $\Delta$ (right).}
    \label{fig:dist}
\end{figure}

Figure~\ref{fig:dist} shows that most verifiable balls have radii below 0.07. Varying $\theta$, $\Delta$, and random seeds leads to diverse verifiable balls distributed across the input space. Randomness mainly affects the orthogonal vector $q$ in \eqref{eq:direction}, thereby influencing search directions. Each run is independent and can be parallelized over different $\theta$ and $\Delta$ values. Multi-directional searches can further boost coverage efficiency, which we leave for future work.

\subsection{Geometric Characteristics of the Verifiable Input Space}
\label{parallel}

\begin{figure}[t]
    \centering
    \includegraphics[width=0.48\linewidth]{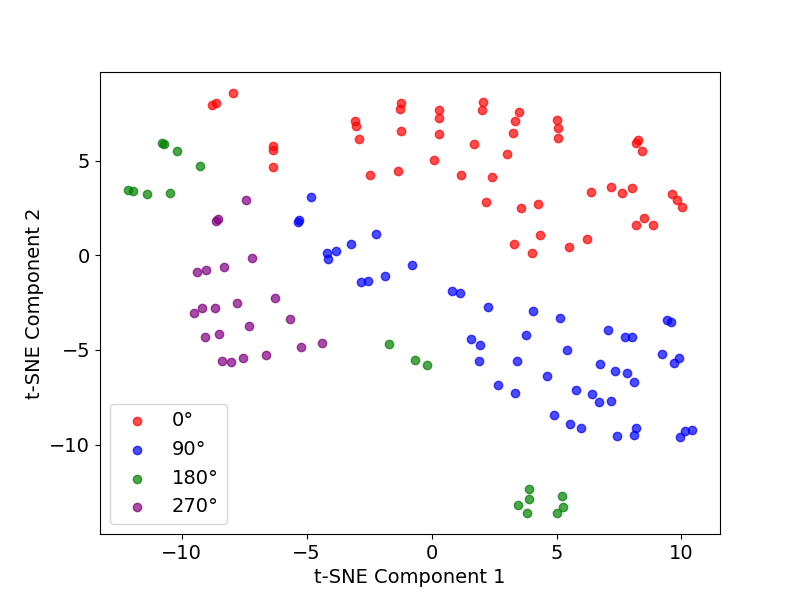}
    \includegraphics[width=0.48\linewidth]{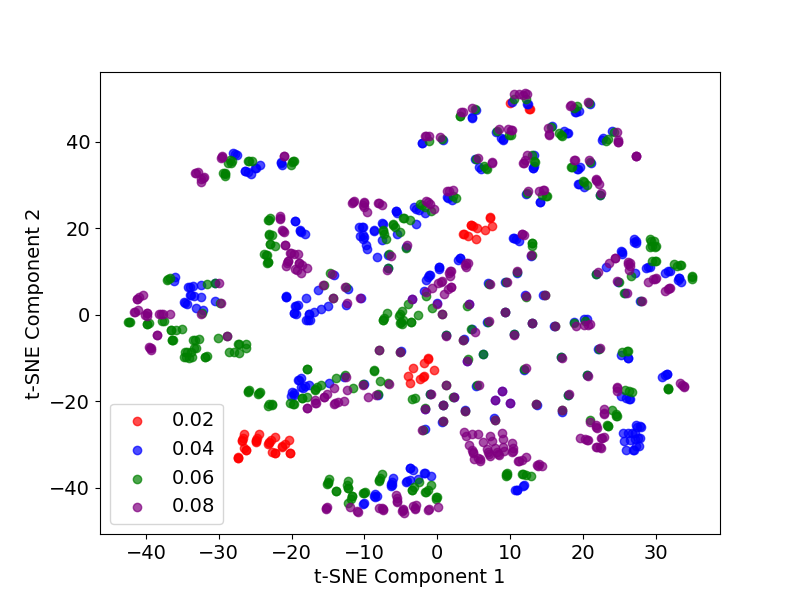}
    \caption{t-SNE projection of verifiable ball centers under different search directions $\theta = \{0, 90, 180, 270\}^\circ$ (left) and initialization shifts $\Delta = \{0, 0.02, 0.04, 0.06, 0.08\}$ (right), with 10 runs per setting.}
    \label{fig:tsne}
\end{figure}

Unlike over- or under-approximation methods, each verifiable ball in \texttt{LEVIS}-$\beta$ is exactly verified and adversarial-free. This allows us to recover structural insights into the geometry of the input space. Figure~\ref{fig:tsne} shows t-SNE projections of verifiable centers based on varying $\theta$ (left) and $\Delta$ (right). The four clusters in Figure~\ref{fig:tsne}(left) reflect the orthogonal directions used in the search and suggest regularity in the structure of the verifiable region. In contrast, Figure~\ref{fig:tsne}(right) shows that changing $\Delta$ leads to denser but distinguishable center clusters. Together, these plots highlight the structural diversity and robustness of our method in exploring the verifiable space.

\section{Conclusion and Future Work}
\label{sec:conclusion}
Identifying verifiable input regions free from adversarial examples is crucial for model selection, robustness evaluation, and reliable control strategies. This work introduces two MIP-enabled search algorithms, \texttt{LEVIS}$-\alpha$ and \texttt{LEVIS}$-\beta$, to identify verifiable input spaces. We formulate two optimization problems: one for computing the maximum verifiable ball and another for locating nearest adversarial points along specified directions. These guide the \texttt{LEVIS} algorithms in dynamically updating ball centers, enabling efficient exploration of the verifiable input space. We improve MIP scalability by up to 6$\times$ via a hybrid NLP-MIP solver, with a worst-case optimality gap below 0.004. Unlike local optimizers, our method ensures global optimality when neuron activation patterns match those from the NLP solution. We visualize verifiable balls under various norms to inform search design and benchmark our method against state-of-the-art techniques in terms of efficiency and radius accuracy. Finally, we analyze the distribution of verifiable balls collected by \texttt{LEVIS}$\text{-}\beta$ across scenarios. Future work will adapt \texttt{LEVIS} to neural network-based control systems to certify safe input regions ensuring stability and reliability.

\section*{Acknowledgements}
\label{sec:aknowledgements}
The work was funded by Los Alamos National Laboratory's Directed Research and Development project, ``Artificial Intelligence for Mission (ArtIMis)'' under U.S. DOE Contract No. DE-AC52-06NA25396.

\section*{Impact Statement}
\label{sec:impact_statement}
This work introduces \texttt{LEVIS}, a novel MIP-enabled search framework for identifying verifiable input spaces in neural networks, ensuring robustness against adversarial attacks. By formulating two complementary optimization problems, we efficiently compute the maximum verifiable region and locate directional adversarial points, enabling scalable and reliable model verification. Our approach significantly enhances neural network safety and interpretability, with potential applications in autonomous systems, power grid stability, and secure AI-driven decision-making.

\newpage
\bibliography{ref}
\bibliographystyle{style/icml2025}

\appendix
\clearpage
\onecolumn
\section{Appendix: Theoretical Results} 
\begin{theorem}
For any convex bounded verifiable region \( \mathcal{C} \subset \mathbb{R}^d \), the sequence of balls \( \{B(c_n)\} \) generated by \texttt{LEVIS}-$\alpha$ converges to a ball \( \mathcal{B}(c) \) that intersects the boundary of \( \mathcal{C} \) at least at two points. Specifically, for any \( \epsilon > 0 \), there exists a number of algorithm iterations \( N \) such that for all \( n > N \), \( \mathcal{B}(c) \) encompasses two points almost symmetric about the center \( c \) with less than \( \epsilon \) mismatch, and these two points lie on the boundary of \( \mathcal{C} \).
\end{theorem}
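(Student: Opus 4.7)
The plan is to combine compactness of $\mathcal{C}$ with a monotonicity argument on the radius sequence and a fixed-point characterization of the averaging update. At each iteration $n$, the algorithm produces $2d$ boundary points $b_1^n,\ldots,b_{2d}^n$ by solving \eqref{eq:original_problem} and \eqref{eq:ortho}, and updates $c_{n+1}=\frac{1}{2d}\sum_{\ell=1}^{2d} b_\ell^n$, with radius $r_{n+1}=\|b_1^{n+1}-c_{n+1}\|_p$. The goal is to show (i) the iterates converge, and (ii) the limiting ball touches $\partial\mathcal{C}$ at two (almost) antipodal points.

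First I would establish boundedness. Since $\mathcal{C}$ is convex and bounded, every adversarial point produced by \eqref{eq:original_problem}–\eqref{eq:ortho} lies in $\overline{\mathcal{C}}$, so the whole trajectory $\{c_n\}\cup\{b_\ell^n\}$ is contained in a compact set, and Bolzano–Weierstrass extracts a convergent subsequence $c_{n_k}\to c^\star$ with $r_{n_k}\to r^\star$. To upgrade subsequential convergence to full convergence, I would introduce a potential that decreases along the iteration: a natural candidate is the asymmetry functional
\begin{equation*}
\Phi(c,\{b_\ell\})=\sum_{j=0}^{d-1}\bigl\|\,b_{2j+1}+b_{2j+2}-2c\,\bigr\|_p^2,
\end{equation*}
which penalizes the failure of each collinear $\theta=0$ pair to be symmetric about $c$. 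The update $c_{n+1}=\frac{1}{2d}\sum_\ell b_\ell^n$ is precisely the minimizer of $c\mapsto \sum_\ell \|b_\ell^n-c\|_2^2$, so it drives $\Phi$ toward zero; combined with the contractive nature of the directional adversarial map under the $\theta=0^\circ$ constraint, this gives $\Phi(c_{n+1})\leq\Phi(c_n)$, hence convergence of $c_n$ to a unique limit $c^\star$.

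Next I would analyze the limit. Continuity of the optimal value and the (generic) optimizer of \eqref{eq:original_problem}–\eqref{eq:ortho} in the parameter $c$ transfers the identity $c_{n+1}=\frac{1}{2d}\sum_\ell b_\ell^n$ to the limit:
\begin{equation*}
c^\star=\tfrac{1}{2d}\sum_{\ell=1}^{2d}b_\ell^\star,\qquad b_\ell^\star\in\partial\mathcal{C}.
\end{equation*}
The algorithm pairs each $b_{2j}^\star$ with its $\theta=0^\circ$ counterpart, forcing the two to be collinear through $c^\star$ and on opposite sides. At the fixed point $c^\star$, the averaging balance combined with the equality of radii (all $b_\ell^\star$ lie at distance $\geq r^\star$, and at least one achieves it) forces the primary collinear pair, say $(p_1,p_2):=(b_1^\star,b_2^\star)$ obtained along the nearest-adversarial direction and its opposite, to satisfy $p_1+p_2=2c^\star$ and $\|p_1-c^\star\|_p=\|p_2-c^\star\|_p=r^\star$. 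Both $p_1,p_2$ thus lie on $\partial\mathcal{C}\cap\partial\mathcal{B}(c^\star)$, which is the desired conclusion.

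Finally, the quantitative $\epsilon$-statement follows by an approximation argument. Given $\epsilon>0$, convergence of $c_n$ and of $\Phi(c_n)\to 0$ together with the stopping criterion $|r_n-r_{n-1}|<\epsilon$ yield an index $N$ beyond which $\|b_{2j+1}^n+b_{2j+2}^n-2c_n\|_p<\epsilon$ for the primary collinear pair, which is the mismatch referred to in the theorem. The chief obstacle I anticipate is justifying the monotonicity step: the nearest-adversarial map $c\mapsto b_1(c)$ may be set-valued precisely at centers with multiple equidistant boundary points, so $\Phi$ need not be smooth. I would handle this via a Clarke subdifferential argument (or by perturbing $c$ infinitesimally to select a single-valued branch), showing that the discrete descent $\Phi(c_{n+1})\leq\Phi(c_n)$ remains valid despite non-smoothness, which together with the lower bound $\Phi\geq 0$ delivers the required convergence.
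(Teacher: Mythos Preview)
Your overall skeleton is close in spirit to the paper's: both arguments track an ``asymmetry'' quantity and claim it is driven to zero by the averaging update.  The paper works with $\Delta_n=\max$ difference of opposite ray-lengths from $c_n$ to $\partial\mathcal C$ and introduces a geometric constant $\lambda$ (the maximal rate at which the width of $\mathcal C$ along a line varies as the line moves) to argue $\Delta_{n+1}\le\Delta_n-\lambda r_n$.  Your $\Phi$ plays the same role, and the compactness/continuity framing you add is actually cleaner than what appears in the paper.

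The genuine gap in your plan is the monotonicity step, and the fix you propose does not address the right difficulty.  The observation that $c_{n+1}$ minimizes $c\mapsto\sum_\ell\|b_\ell^n-c\|_2^2$ only tells you that $\Phi(c_{n+1},\{b_\ell^n\})\le\Phi(c_n,\{b_\ell^n\})$ for the \emph{old} boundary points.  But $\Phi(c_{n+1})$ as you use it is computed with the \emph{new} adversarial points $b_\ell^{n+1}$ obtained by re-solving \eqref{eq:original_problem}--\eqref{eq:ortho} at $c_{n+1}$, and these can differ arbitrarily from $b_\ell^n$.  Nothing in your argument (nor the Clarke-subdifferential remedy, which handles set-valuedness but not this recomputation) bounds $\Phi(c_{n+1},\{b_\ell^{n+1}\})$ in terms of $\Phi(c_n,\{b_\ell^n\})$.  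The paper's $\lambda$ is precisely the device that supplies the missing link: convexity bounds how much the ray-lengths to $\partial\mathcal C$ can change when the center moves, so the asymmetry at the new center with the new boundary points is controlled by the old asymmetry minus a positive multiple of the step size.  You need some analogue of this geometric Lipschitz control; without it the descent claim is unsupported.

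A second, smaller issue is the fixed-point step.  From $c^\star=\tfrac1{2d}\sum_\ell b_\ell^\star$ you cannot conclude that any particular pair $(b_{2j+1}^\star,b_{2j+2}^\star)$ satisfies $b_{2j+1}^\star+b_{2j+2}^\star=2c^\star$; the average constraint is aggregate, not pairwise.  The paper gets around this by arguing directly that $\Delta_n\to0$ forces at least one opposite-ray pair to equalize (so that $\mathcal B(c)$ touches $\partial\mathcal C$ on both ends of that ray), rather than trying to extract symmetry from the averaging identity.
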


\begin{proof}
Consider the convex, bounded, and closed region \( \mathcal{C} \subset \mathbb{R}^d \). At each step \( n \) of Algorithm \ref{alg1}, a ball \( B(c_n)\) with the radius \(r_n\) is generated, where \( c_n \) is the center, with the property that \( B(c_n)\subseteq \mathcal{C} \). By convexity and boundedness, the boundary \( \partial \mathcal{C} \) is non-empty, compact, and ensures that the sequence \( \{B(c_n)\} \) must converge to a configuration where the ball touches the boundary of \( \mathcal{C} \) at multiple points.

Next, we introduce a constant \( \lambda \), which we define as the greatest rate of change in the width of the region \( \mathcal{C} \) along any line passing through its center. Specifically, \( \lambda \) characterizes the maximum rate at which the distance between two points on the boundary of \( \mathcal{C} \), on opposite sides of a line passing through the center, changes as we move the line across \( \mathcal{C} \). This constant \( \lambda \) is finite due to the convexity and boundedness of \( \mathcal{C} \).

Now, consider the rays emanating from the center \( c_n \) in the directions aligned with the axes of the \( \ell_1 \) norm. The algorithm iteratively adjusts the center \( c_n \) to reduce the asymmetry of the distances from \( c_n \) to the boundary \( \partial \mathcal{C} \) along these rays. Let \( \Delta_n \) denote the measure of asymmetry at the \( n \)-th step, defined as the maximum difference between the distances from \( c_n \) to \( \partial \mathcal{C} \) along opposite rays.

By the convexity of \( \mathcal{C} \), moving the center \( c_n \) results in a decrease in \( \Delta_n \) at a rate proportional to the radius \( r_n \) from \( c_n \) to the boundary \( \partial \mathcal{C} \). Specifically, due to the convexity and the definition of \( \lambda \), the reduction in \( \Delta_n \) between steps \( n \) and \( n+1 \) can be bounded below by \( \lambda \cdot r_n \). Hence, the sequence \( \{\Delta_n\} \) converges to zero as \( n \) increases.

To formalize this, given any \( \epsilon > 0 \), choose \( N \) such that for all \( n > N \), the asymmetry \( \Delta_n \) is less than \( \epsilon \). The constant \( \lambda \) ensures that for \( n > N \), the ball \( B(c_n) \) touches the boundary \( \partial \mathcal{C} \) at least at two distinct points, as the rays from the center are symmetrized to within \( \epsilon/\lambda \). Thus, \( B(c_n) \) intersects \( \partial \mathcal{C} \) at two points, say \( x_n \) and \( y_n \), satisfying \( \|x_n - y_n\|_1 \geq \frac{r_n}{2} \).

Moreover, since the asymmetry \( \Delta_n \) is controlled by \( \lambda \), the point \( b_n \) within \( B(c_n) \) can be chosen such that \( \|x_n - b_n\|_1 > \frac{r_n}{2} \) and \( \min_{r \in R} \|b_n - r\| < \epsilon \).

Therefore, the sequence \( \{B(c_n)\} \) converges to a ball that touches the boundary of \( \mathcal{C} \) at at least two distinct points, which completes the proof.
\end{proof}

\textbf{Note:} In general, for non-convex regions, this does not necessarily hold. In particular, separate convex sub-regions separated by a bottleneck can form an oscillator in this algorithm without convergence. In practice, we conjecture that this is a rare scenario and that our algorithm seems to reliably get ``caught'' in small convex sub-regions.

\section{Appendix: Details of Datasets and Extra Experimental Results } 
\subsection{Optimal Power Flow Datasets}
\label{sec:DC_OPF_setup}
\begin{figure}[!h]
    \centering \includegraphics[width=0.35\textwidth]{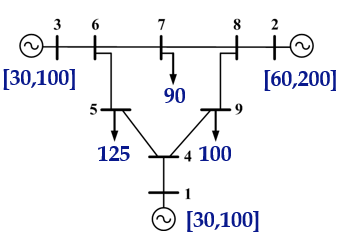}
    \caption{IEEE 9-Bus system topology . The system has 3 demand nodes (5,7,9) with nominal values (125, 90, 100) in MW, and the 3 generator nodes with (1,2,3) with limits ([30,100], [60,200], [30,100]) in MW. Therefore, the neural network has 3 inputs and 3 outputs.} 
    \label{fig:ieee9}
\end{figure}  
We generate direct-current (DC) power flow datasets using the IEEE 9-bus power grid benchmark, a widely recognized model for simulating energy flow in power grids, depicted in Figure~\ref{fig:ieee9}. Nodes 1, 2, and 3 are equipped with generators, while nodes 5, 7, and 9 serve as load points. Power operators can adjust the output at generator nodes, allowing for variations within predefined ranges. 
The neural network, $f_{\text{opf}}(P_{D}) = P_{G}$, predicts the electricity generated ($P_G$) at these three generator nodes based on the demand ($P_D$) from the load nodes and $P_D \in \mathbb{R}^3$. The dataset is created by solving the DC Optimal Power Flow (DC-OPF) problem \cite{DC-OPF} using nominal inputs $x_0 = [125, 90, 100]^T$ from standard datasets, perturbed by 10\% uniform noise to produce a diverse set of 1,000 data samples. The DC-OPF specification requires that outputs stay within the physical limits of the generators, specifically $P_{G1} \in [30,100], P_{G2} \in [60, 200]$, and $P_{G3} \in [30,100]$. We trained a three-layer ReLU neural network using the Adam optimizer \cite{KBJ14} to compute DC-OPF solutions, selecting 80\% of the data samples randomly for training and reserving the remaining for testing.


\newpage

\subsection{MNIST and CIFAR-10 Datasets}
We utilize the MNIST digit dataset, which consists of 60{,}000 training and 10{,}000 testing samples, for an image recognition task~\cite{mnist}. MNIST is a widely used benchmark in neural network verification problems~\cite{IBP,Crown,kotha2024provably}, comprising handwritten digits across ten classes (0 to 9). The neural network $f_{\text{image}}(x) = \hat{y}$ predicts class probabilities for each input $x$, and the goal is to ensure that the predicted probability for the true class $k$ exceeds that of any other class. This is operationalized as $\min_{k \neq l} (e_k - e_l)^T f_{\text{image}}(x)$. 

We implement a two-layer ReLU neural network trained using cross-entropy loss and optimized with the Adam optimizer at a learning rate of 0.001. The network architecture for MNIST consists of three layers: input $x^0 \in \mathbb{R}^{784}$, two hidden layers $z^1, z^2 \in \mathbb{R}^{50}$, and output layer $z^3 \in \mathbb{R}^{10}$. The network achieves a test accuracy of 97\%. For evaluation, we select the first image in the test dataset as the initial point $x^0$ for computing the maximum verifiable ball. Principal Component Analysis (PCA) is optionally used for MNIST to accelerate training, though it is not essential for the core results.

For the CIFAR-10 dataset~\cite{cifar10}, we use a subset of 10{,}000 test images. CIFAR-10 consists of 32$\times$32 RGB images across ten object categories such as airplanes, cars, birds, and so on. Unlike MNIST, PCA is not used for dimensionality reduction due to the higher complexity and diversity of the image content.

Code and experiments are being made available at: \url{https://github.com/LEVIS-LANL/LEVIS}

\subsection{Visualization of the nearest adversarial point for image classification }

For image recognition, we find the closest adversarial point to the clean image of the digit ``7'', in Figure \ref{fig:mnist_adversarial_perturbation}. The distance to the adversarial point was 0.1. (in terms of pixel magnitude), and despite the image being visually the same, the classifier, even with its high accuracy of 93\%, mistakes it for the digit ``3''.  

\begin{figure}[t]
    \centering
    \includegraphics[width=0.49\textwidth]{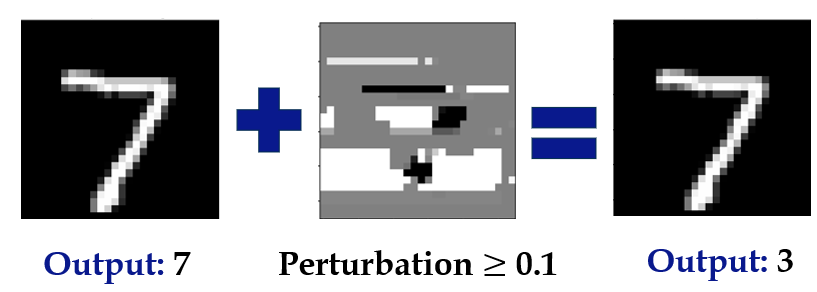}
    \caption{The minimum perturbation needed to the clean image of ``7'' (on the left) has a value of 0.1 and is shown in the center image. This results in an image (on the right) that is visually very close, but the classifier mistakes it for the digit ``3''.} 
    \label{fig:mnist_adversarial_perturbation}
\end{figure}

\subsection{Statistical Performance of LEVIS-$\beta$}
\begin{table}[!ht]  
\centering 
\caption{Statistical Summary of the Radii}
\label{tab:radii_statistics}
\begin{tabular}{ c c }  
\hline
\textbf{Statistic} & \textbf{Value} \\ 
\hline
\hline
Number of Radii & 52 \\ 
Minimum Radius & 0.10 \\ 
Maximum Radius & 23.58 \\ 
Median & 0.42 \\ 
Mean & 3.75 \\ 
Lower Quartile & 0.16 \\ 
Upper Quartile & 2.46 \\ 
Variance & 45.25 \\ 
Standard Deviation & 6.73 \\ 
\hline
\end{tabular}
\end{table}
 We provide more statistics on \texttt{LEVIS}-$\beta$ for DC-OPF. The main results are found in Table~\ref{tab:radii_statistics}.
\subsection{CPU Resources}
We used three machines for the computations. The specifications of the   machines are shown in Table \ref{tab:machine1}, Table \ref{tab:machine2}, and Table~\ref{tab:m3}

\begin{table}[h]
\centering
\caption{CPU 1 Information Summary}
\label{tab:machine1}
\begin{tabular}{lp{4cm}}
\hline
\textbf{Attribute} & \textbf{Details} \\
\hline
Processor & Intel(R) Core(TM) i7-8550U CPU @ 1.80GHz \\
Base Speed & 1.99 GHz \\
Current Speed & 1.57 GHz (can vary) \\
Cores & 4 \\
Logical Processors & 8 \\
L1 Cache & 256 KB \\
L2 Cache & 1.0 MB \\
L3 Cache & 8.0 MB \\
Virtualization & Disabled \\
Hyper-V Support & Yes \\
\hline
\end{tabular}
\end{table}

\begin{table}[h]
\centering
\caption{CPU 2 Information Summary}
\label{tab:machine2}
\begin{tabular}{lp{4cm}} 
\hline
\textbf{Attribute} & \textbf{Details} \\
\hline
Processor & Intel(R) Xeon(R) Gold 6258R CPU @ 2.70GHz \\
Base Speed & 2.70 GHz \\
Cores & 56 \\
Logical Processors & 112 \\
L1 Cache & 100 MB \\
L2 Cache & 3136 MB \\
L3 Cache & 154 MB \\
\hline
\end{tabular}
\end{table}

\begin{table}[h]
\centering
\caption{CPU 3 Information Summary}
\label{tab:m3}
\begin{tabular}{lp{4cm}}
\hline
\textbf{Attribute} & \textbf{Details} \\
\hline
Processor & Apple M3 Max   CPU @ 64GHz \\  
Cores & 16 \\
\hline
\end{tabular}
\end{table}

\label{sec:appendix}

\end{document}